\documentclass{article}

\PassOptionsToPackage{numbers, square}{natbib}

\usepackage{arxiv}

\usepackage[utf8]{inputenc} 
\usepackage[T1]{fontenc}    
\usepackage{hyperref}       
\usepackage{url}            
\usepackage{booktabs}       
\usepackage{amsfonts}       
\usepackage{nicefrac}       
\usepackage{microtype}      
\usepackage[x11names]{xcolor}         

\usepackage{graphicx}
\usepackage{amsmath}
\usepackage{bbm}
\usepackage{amsthm}
\theoremstyle{plain}
\newtheorem{theorem}{Theorem}[section]
\newtheorem{proposition}[theorem]{Proposition}
\usepackage{multirow}
\usepackage{makecell}
\usepackage{threeparttable}
\usepackage{pifont}
\newcommand{\greencheck}{\textcolor{Green4}{\ding{52}}}
\newcommand{\redcross}{\textcolor{Firebrick4}{\ding{56}}} 
\usepackage{algorithm}
\usepackage{algorithmicx}
\usepackage{algpseudocode}

\title{TMN: A Lightweight Neuron Model \\ for Efficient Nonlinear Spike Representation}

%

\author{Yiwen Gu, Junchuan Gu, Haibin Shen, Kejie Huang \thanks{Corresponding author.} \\
College of Information Science \& Electronic Engineering\\
Zhejiang University\\
38 Zheda Road, Hangzhou, Zhejiang Province 310027, China\\
}

\begin{document}

\maketitle

\begin{abstract}
Spike trains serve as the primary medium for information transmission in Spiking Neural Networks, playing a crucial role in determining system efficiency. Existing encoding schemes based on spike counts or timing often face severe limitations under low-timestep constraints, while more expressive alternatives typically involve complex neuronal dynamics or system designs, which hinder scalability and practical deployment. To address these challenges, we propose the Ternary Momentum Neuron (TMN), a novel neuron model featuring two key innovations: (1) a lightweight momentum mechanism that realizes exponential input weighting by doubling the membrane potential before integration, and (2) a ternary predictive spiking scheme which employs symmetric sub-thresholds $\pm\frac{1}{2}v_{th}$ to enable early spiking and correct over-firing. Extensive experiments across diverse tasks and network architectures demonstrate that the proposed approach achieves high-precision encoding with significantly fewer timesteps, providing a scalable and hardware-aware solution for next-generation SNN computing.
\end{abstract}


\section{Introduction}

Spiking Neural Networks (SNNs), recognized as the third generation of neural network models, are inspired by the biological structure and functionality of the brain \cite{Maass1997snn}. Unlike traditional Artificial Neural Networks (ANNs), which rely on real-valued activations, SNNs utilize discrete spiking events. This enables SNNs to capture temporal dynamics and process information in a manner that closely resembles brain activity \cite{Aboozar2020snnreview}. The event-driven nature of SNNs aligns with the brain's energy-efficient computational paradigm, offering potential for more efficient and low-power computing systems \cite{Yamazaki2022snnreview}.

Spike trains constitute the fundamental information carriers in SNNs, enabling a temporal distribution of information. Developing efficient spike encoding patterns is thus essential for optimizing SNN performance. Various coding schemes, such as rate coding and temporal coding, have been proposed to describe neural activity \cite{Johansson2004ttfs,Thorpe1998rank,Tim2008relative}. Rate coding represents information by the spike count (frequency), while temporal coding encodes information through the precise timing and patterns of spikes \cite{Yang2023lcttfs,Han2020tsc}. For instance, the commonly used Time-to-First-Spike (TTFS) coding scheme represents information inversely with respect to spike latency \cite{Stanojevic2022exact,Rueckauer2018ttfs}.

However, using either spike counts or timing for information representation leads to a linear scaling of distinguishable encoded values with the number of timesteps. This inherently limits the expressive power of SNNs under low-timestep constraints. A natural solution is to apply temporal importance to inputs, where combinations of differently weighted spikes enable richer value encoding. Early explorations \cite{Rueckauer2021pattern,Jaehyun2018phase,Christoph2021fs} have demonstrated the feasibility of this direction with significantly reduced timesteps, but also revealed several key challenges: 

(1) \textbf{Increased system complexity:} Time-varying weights must be controlled and applied, and the firing thresholds of neurons must dynamically adapt to these weights. 

To address this, we observe that information extraction from spike trains fundamentally depends on neuron's decoding mechanism. We innovatively introduce a \textit{momentum term into neuronal dynamics to achieve weighted decoding.} This term doubles the membrane potential before input integration, enabling a stepwise weighting process that is mathematically equivalent to applying exponentially decaying weights. By modeling the relative rather than absolute importance of historical inputs, our approach eliminates the need for time-varying thresholds. Importantly, the momentum mechanism can be realized with simple wiring between hardware units, incurring negligible resource and energy overhead.

(2) \textbf{Inefficient neural computation:} The uncertainty of future input pattern and the uneven encoding capacity of the spikes pose challenges for timely and accurate encoding. A commonly adopted mitigation strategy, drawn from TTFS coding \cite{Yang2023lcttfs,Rueckauer2018ttfs,Stanojevic2022exact,park2020t2fsnn}, is to use a pre-charge phase to reduce the input uncertainty. While pipeline scheduling can hide some latency, long pre-charge times degrade throughput and remain a critical bottleneck. 

\begin{table}[tb]
    \caption{Common symbols in this paper.}
    \centering
    \resizebox{\textwidth}{!}{
        \begin{threeparttable}
            \begin{tabular}{ll|ll|ll}
                \toprule
                \textbf{Symbol} & \textbf{Definition} & \textbf{Symbol} & \textbf{Definition} & \textbf{Symbol} & \textbf{Definition} \\
                \midrule
                $i,j$ & neuron index & $s[t]$ & spike sequence & $u[t]$ & membrane potential after reset \\
                $l$ & layer index & ${v}_{th}$ & (full) threshold & $\hat{u}[t]$ & membrane potential before reset \\
                $T$ & encoding timesteps & $\tilde{v}_{th}$ & predictive threshold & $z[t]$ & integrated inputs (PSP)\tnote{1} \\
                \bottomrule
            \end{tabular}
            \begin{tablenotes}
                \footnotesize
                \item[1] Postsynaptic potential
            \end{tablenotes}
        \end{threeparttable}
        \label{tab: symbol}
    }
\end{table}

To overcome this limitation, we introduce a Ternary Predictive Spiking (TPS) mechanism. A pair of predictive thresholds $\pm\tilde{v}_{th}$---each being half the magnitude of the full threshold $v_{th}$---are used to trigger spikes, while the reset amount remains $v_{th}$. This configuration enables \textit{predictive spiking when half the required information is received,} and the dual thresholds form \textit{a feedback system that corrects potential over-firing}. We show this mechanism reduces the required pre-charge length and enhances system efficiency.

Building upon these characteristic dynamics, we refer to the neuron model as the Ternary Momentum Neuron (TMN). The generated spike pattern share structural and weighting similarities with the canonical signed digit representation, and is thus referred to as the Canonical Signed Spike (CSS) encoding. 

The main contribution of this work lies in a novel and efficient nonlinear spike representation which compresses the number of timesteps to logarithmic scale. Temporal weighting is realized through the lightweight yet effective TMN dynamics, reducing system complexity and enhancing overall efficiency. We validate the proposed method through extensive experiments across various tasks and network architectures, demonstrating its effectiveness and generalizability.


\section{Preliminary}


\subsection{Spiking Neuron Models}

Spiking neurons serve as the fundamental computational units of SNNs. Each incoming spike contributes to the postsynaptic neuron's membrane potential. When the potential exceeds a predefined threshold, the neuron emits a spike. A general discretized spiking neuron model can be expressed as follows:
\begin{equation}
    \left\{\begin{aligned}
        & \hat{u}_i^l[t]=\mathcal{D}(u_i^l[t-1],z_i^l[t]) \\
        & s_i^l[t]=\mathcal{H}(\hat{u}_i^l[t]-v^l_{th}) \\
        & u_i^l[t] =\hat{u}_i^l[t]-v^l_{th}s[t]
    \end{aligned}
    \right.
    \label{eq: spiking neuron model}
\end{equation}
Here, $i$ denotes the neuron index and $l$ the layer index. $v^{l}_{th}$ denotes the firing threshold and $\hat{u}_{i}^{l}[t],u_i^l[t]$ represent the membrane potential before and after reset, respectively. $\mathcal{D}(\cdot)$ defines the generalized membrane dynamics, which integrates past potential and new input. The integrated input at time $t$ is given by:
\begin{equation}
    z_{i}^{l}[t]=v^{l-1}_{th}\sum_{j}w_{ij}^{l}s_{j}^{l-1}[t]+b_{i}^{l}
    \label{eq: integrated inputs}
\end{equation}
where $w_{ij}^{l}$ is the synaptic weight and $b_{i}^{l}$ is the bias. Spike emission is determined by the Heaviside step function $\mathcal{H}(\cdot)$, which outputs $1$ if the argument is non-negative and $0$ otherwise. The spike train $s_{i}^{l}[t]$ can equivalently be expressed as a set of discrete firing events:
\begin{equation}
    s_{i}^{l}[t]=\sum_\tau\mathbbm{1}_{\tau\in\mathbb{F}_{i}^{l}}
    \label{eq: spike train}
\end{equation}
where $\mathbbm{1}$ is the indicator function and $\mathbb{F}_{i}^{l}$ denotes the set of firing times defined by the threshold-crossing condition:
\begin{equation}
    \tau:\hat{u}_{i}^{l}[\tau]\ge v_{th}^l
    \label{eq: spike time}
\end{equation}
For clarity, the definitions of common symbols are provided in Table~\ref{tab: symbol}.

\begin{figure}[tb]
    \centering
    \includegraphics[width=\linewidth]{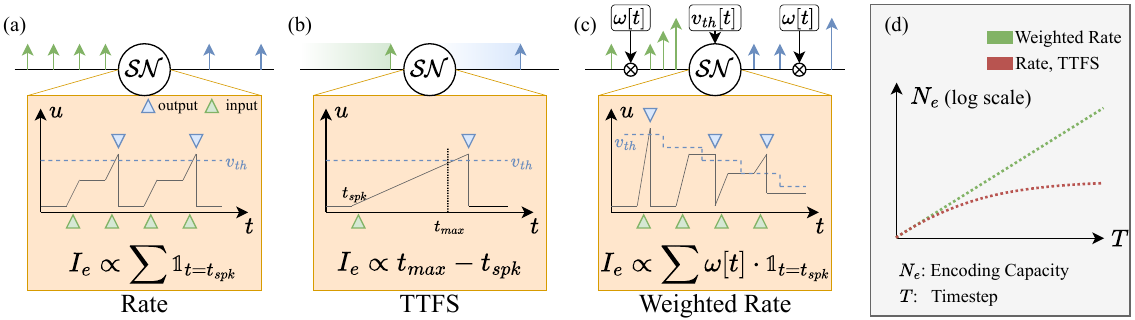}
    \caption{Spike representation of information. $\mathcal{SN}$ represents the spiking neuron model. The encoded information amount is given by $I_e$. The orange boxes illustrate how neurons decode information from spike patterns. (a) Rate coding encodes information in spike count or frequency. Each incoming spike contributes a fixed value (synaptic weight) to the membrane potential. (b) TTFS coding encodes information in the inverse of spike latency. A single spike triggers a sustained post-synaptic accumulation process. (c) More information can be encoded by applying temporal weights to incoming spikes. This can be viewed as a variation of rate coding where neuron adjusts its threshold according to the input weights. (d) Encoding capacity scales with timestep. The vertical axis $N_e$ (\textit{log scale}) indicates the number of encodable values.}
    \label{fig: spk repr}
\end{figure}

\subsection{Spike Representation}

Currently, rate coding is the most prevalent approach in SNNs for information representation due to its implementation simplicity \cite{Cao2015rate,Rueckauer2017rate,Hao2023cos,Bu2023qcfs,Hu2023fastsnn}. Under this framework, each spike triggers a discrete increment in membrane potential, and information is encoded in the spike count or frequency, as illustrated in Figure~\ref{fig: spk repr}(a).

To promote spike sparsity, alternative methods such as Time-To-First-Spike (TTFS) coding have been proposed \cite{Johansson2004ttfs,Yang2023lcttfs,Stanojevic2022exact}. TTFS encodes information in the inverse of spike latency: earlier spikes represent stronger stimuli. Neurons continuously integrate membrane potential upon receiving spikes, thereby extracting information from spike timing, as shown in Figure~\ref{fig: spk repr}(b).

\textit{However, both rate and TTFS coding schemes are inherently linear}, where spike count or latency scales proportionally with input intensity. This linear relationship constrains the growth of encoding capacity, as illustrated in Figure~\ref{fig: spk repr}(d), limiting the number of distinguishable values when few timesteps are available. To address this limitation, some works \cite{Rueckauer2021pattern,Christoph2021fs} have explored encoding information using weighted spikes. Typically, a set of exponentially decaying weights, such as $2^{T-t}$, is applied to inputs. Despite the improvements in encoding capacity, these direct weighting strategies increase system complexity and energy consumption. Challenges include the control and application of time-varying weights, as well as per-step adjustment of firing thresholds, as shown in Figure~\ref{fig: spk repr}(c).

\subsection{ANN-SNN conversion}

The core principle of ANN-SNN conversion involves the representation of ANN activations $a$ through discrete spike trains $s[t]$ \cite{Li2021cal,Rueckauer2017rate,Stanojevic2022exact}. \textit{This process can be interpreted as a layer-wise encoding-decoding procedure.} Specifically, given the input $s^{l-1}[t]$ encoding $a^{l-1}$ and SNN using the same weight $w^l$ as in the ANN, spiking neurons with appropriate dynamics decode the inputs and generate output spikes $s^l[t]$ that approximate $a^l$.

The spike representation strategy used to encode activations plays a critical role in determining the efficiency of the converted SNN. Improving the expressiveness of this representation is therefore central to enhancing conversion performance, and represents a key application scenario for our proposed method.


\begin{figure}[tb]
    \centering
    \includegraphics[width=\linewidth]{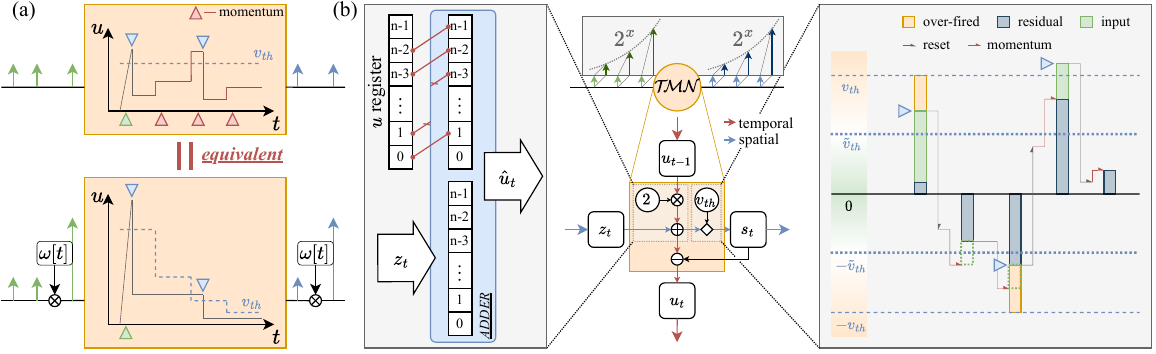}
    \caption{Ternary Momentum Neuron. (a) Momentum-based input weighting is mathematically equivalent to directly applying exponential weights. Our approach is preferable as it maintains a fixed threshold and relies on a simple stepwise weighting procedure. (b) Breakdown of the TMN model. Middle: TMN interprets the input as an exponentially weighted spike train while maintaining a simple computational graph. Left: Applying momentum is hardware-aware and can be implemented with negligible energy cost purely through wiring. The low bits of the membrane potential register are connected to the high bits of the adder input, performing a shift operation. Right: Predictive spiking nature of TMN. The predictive threshold pair $\pm\tilde{v}_{th}$ are used to determine spike firing, while $\pm{v}_{th}$ is used as reset amount. This design enables fast and accurate nonlinear encoding of dynamic inputs.}
    \label{fig: tmn}
\end{figure}

\section{Methods}

\subsection{Momentum-Based Weighting}

To enhance information encoding capacity, we adopt the following spike representation:
\begin{equation}
    a\propto\sum_t 2^{T-\tau}s[t]
    \label{eq: weighted repr}
\end{equation}
The weight $2^{T-t}$ is chosen because it enables a uniform representation of $a$ through combinations of spikes at different timesteps, akin to binary encoding. However, as illustrated in Figure~\ref{fig: spk repr}(c), directly applying weights incurs high implementation costs, due to the need for dynamic multipliers and threshold adjustment.

Recognizing that information extraction is ultimately determined by how neurons decode incoming sequences, we propose a momentum-based spiking neuron model for efficient input weighting. Its dynamics are defined as:
\begin{equation}
    \hat{u}[t]=\mathcal{D}(u[t-1],z[t])=2\times u[t]+z[t]
    \label{eq: momentum weight}
\end{equation}
The momentum term ($2\times$) amplifies previous residual membrane potential before adding current input, effectively modeling the \textit{relative importance} between inputs, which eliminates the need to adjust the firing threshold. We formally prove that this stepwise weighting is mathematically equivalent to directly receiving exponentially weighted spikes:

\begin{proposition}
    The momentum-based dynamics described in Equation~\ref{eq: momentum weight} are equivalent to integrating spikes weighted by an exponential kernel $2^{T-t}$.
    \label{prop: weight eqv}
\end{proposition}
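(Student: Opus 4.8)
The plan is to unroll the recursion in Equation~\ref{eq: momentum weight} and show that the membrane potential at the final timestep $T$ is exactly the exponentially weighted sum of integrated inputs $\sum_{t=1}^{T} 2^{T-t} z[t]$, modulo the reset terms. First I would consider the neuron without resets (or, equivalently, track the ``accumulated drive'' $\hat{u}$ separately from the spike/reset bookkeeping): starting from $u[0]=0$, apply $\hat{u}[t] = 2u[t-1] + z[t]$ repeatedly. A straightforward induction on $t$ gives $\hat{u}[t] = \sum_{k=1}^{t} 2^{t-k} z[k]$ in the reset-free case; the base case $t=1$ is immediate since $\hat{u}[1]=z[1]$, and the inductive step just factors out the $2\times$ from the previous sum and appends $z[t]$. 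Evaluating at $t=T$ yields $\hat{u}[T]=\sum_{k=1}^{T} 2^{T-k} z[k]$, which is precisely integration of inputs under the exponential kernel $2^{T-t}$, establishing the claimed equivalence.

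The next step is to handle the reset terms so the statement holds for the actual spiking dynamics, not just the linearized version. Here I would substitute $u[t-1] = \hat{u}[t-1] - v_{th}\,s[t-1]$ into the momentum update and unroll again, which produces $\hat{u}[t] = \sum_{k=1}^{t} 2^{t-k} z[k] - v_{th}\sum_{k=1}^{t-1} 2^{t-k} s[k]$. The key observation is that the reset subtraction at step $k$ propagates forward also picking up the factor $2^{t-k}$, so in the ``direct weighting'' picture one should compare against a neuron that (i) receives inputs $2^{T-t} z[t]$ and (ii) resets by $2^{T-t} v_{th}$ whenever it spikes at time $t$ — i.e. the threshold/reset scales in lockstep with the input weight. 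This is exactly the ``threshold adjustment'' picture of Figure~\ref{fig: spk repr}(c), and showing the two bookkeepings coincide term-by-term completes the correspondence. It also makes precise the paper's claim that the momentum formulation lets you keep a \emph{fixed} threshold $v_{th}$ while the direct-weighting formulation needs a time-varying one.

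The main obstacle I anticipate is not the algebra — the induction is routine — but pinning down the precise sense in which ``equivalent'' is meant, since the two models differ in when the weighting is applied (at spike emission time in the direct scheme versus accumulated through the recursion in the momentum scheme). Concretely, I would need to verify that the spike \emph{times} produced by the two schemes agree: the momentum neuron fires at $t$ when $\hat{u}[t] \ge v_{th}$, whereas the direct-weighting neuron fires when its (weighted) potential $\ge 2^{T-t} v_{th}$; dividing the momentum condition through by $2^{T-t}$ (equivalently, comparing $\hat{u}[t]/2^{T-t}$ against $v_{th}$) shows these conditions are identical, so the firing sets $\mathbb{F}$ coincide and hence the induction on the reset terms is self-consistent. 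Once that alignment is checked, the equivalence follows, and I would state it cleanly as: for any input sequence $z[1],\dots,z[T]$, the momentum neuron with fixed threshold $v_{th}$ and the standard neuron receiving $2^{T-t} z[t]$ with time-varying threshold $2^{T-t} v_{th}$ generate the same spike train, and both satisfy $a \propto \sum_t 2^{T-\tau} s[t]$ as in Equation~\ref{eq: weighted repr}.
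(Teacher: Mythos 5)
Your proof is correct, and its algebraic core coincides with the paper's: the paper likewise unrolls the momentum recursion from $u[0]=0$ to obtain $u[T]=\sum_{t=1}^{T}2^{T-t}\bigl(z[t]-v_{th}s[t]\bigr)$, substitutes Equation~\ref{eq: integrated inputs} for $z[t]$, and reads off the exponential kernel on both the presynaptic spikes and the reset terms --- exactly your unrolled identity. Where you go beyond the paper is in pinning down what ``equivalent'' means: the paper stops at the closed-form expression for $u[T]$ and declares equivalence, whereas you additionally verify that the momentum neuron with fixed threshold $v_{th}$ and a direct-weighting neuron receiving $2^{T-t}z[t]$ with threshold and reset scaled to $2^{T-t}v_{th}$ emit the \emph{same spike train}, since the two firing conditions differ only by the positive factor $2^{T-t}$ (your phrasing ``dividing the momentum condition by $2^{T-t}$'' is backwards --- one multiplies the momentum potential, or divides the direct-weighting potential, by $2^{T-t}$ --- but the scaling argument itself is sound). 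This buys a strictly stronger conclusion (equivalence of the full dynamics rather than just of the terminal residual potential) and makes explicit the fixed-threshold versus time-varying-threshold correspondence that the paper only asserts informally in the main text and in Figure~\ref{fig: spk repr}(c).
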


The complete proof is provided in Appendix \ref{apsec: thm threshold setting proof}. Note this weighting mechanism is practically cost-free, as shown on the left side of Figure~\ref{fig: tmn}(b): The doubling operation can be implemented by directly wiring the low bits of the membrane potential register to the high bits of the adder input.

\begin{figure}[tb]
    \centering
    \includegraphics[width=\linewidth]{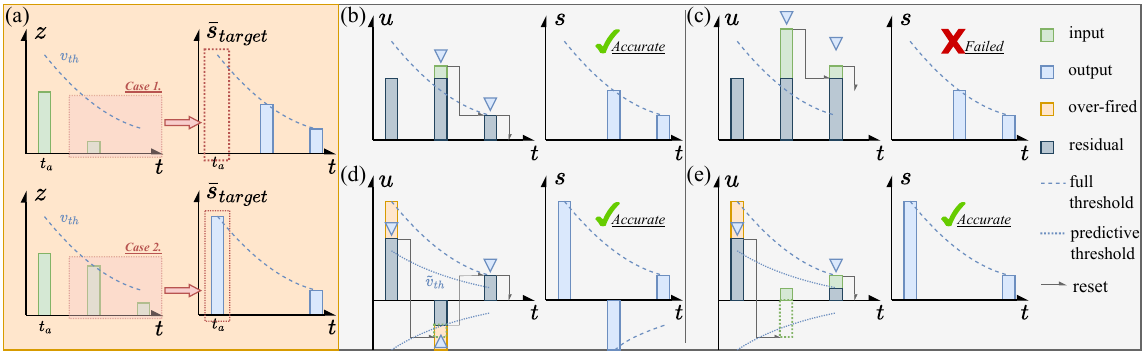}
    \caption{Challenges of neural computation with exponentially weighted spikes. \textit{Weights are directly applied for demonstration clarity.} (a) Temporal encoding uncertainty: Different spike decisions are required at $t_a$ depending on future (unknown) inputs. (b-c) Full-threshold spiking: (b) Successful encoding case. (c) Failure case due to underutilized high-weight spikes, resulting in residual information and encoding errors. (d-e) Proposed ternary predictive spiking. (d) Over-fired positive information is corrected by a negative spike. (e) Predictive spiking promotes better exploits the encoding capacity and achieves accurate result.}
    \label{fig: ps}
\end{figure}

\subsection{Ternary Predictive Spiking}

Exponentially decaying weights improve encoding capacity but make neural computation (encoding) more challenging due to (1) the uneven encoding capacity of spikes across timesteps and (2) uncertainty in future inputs. As illustrated in Figure~\ref{fig: ps}(a), \textit{unknown future inputs can influence the optimal spike encoding strategy at the current timestep.} A common failure case is shown in Figure~\ref{fig: ps}(c), where the high-weight spikes are underutilized, leaving substantial unencoded information that exceeds the capacity of lower-weighted spikes. The correct spike pattern is shown in the lower half of Figure~\ref{fig: ps}(a) for reference.

To address these limitations, we introduce a predictive spiking mechanism designed to minimize the residual membrane potential at each timestep. Specifically, \textit{the firing threshold is reduced to a predictive threshold $\tilde{v}_{th} = \frac{1}{2}v_{th}$, while the reset is still performed using the full threshold $v_{th}$.} A spike is emitted as soon as the membrane potential crosses $\tilde{v}_{th}$, anticipating future accumulation. To handle potential over-firing, a symmetric negative threshold $-\tilde{v}_{th}$ is introduced. These dual thresholds form a feedback mechanism that corrects over-firing and ensures accurate nonlinear encoding.

We prove that choosing $\tilde{v}_{th} = \frac{1}{2}v_{th}$ most effectively confines the residual membrane potential close to zero, ensuring minimal encoding error.

\begin{theorem}
    Let the integrated input $z_i^l[t]$ at each timestep be independent and follow a uniform distribution, $U(0,v_{th}^l)$. When $\tilde{v}_{th}=\frac{1}{2}v_{th}$, for all $t\in\{1,2,\ldots,T\}$, we have:
    \begin{equation}
        \notag
        \mathbb{E}(u_i^l[t])=0\quad\mathrm{and}\quad\mathbb{E}(u_i^l[t]^2)\mathrm{\ is\ minimized.}
    \end{equation}
    \label{thm: threshold setting}
\end{theorem}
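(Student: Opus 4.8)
\noindent\emph{Proof plan.}\quad The plan is first to rescale so that $v_{th}=1$ (the general case follows by multiplying $z$, $u$, $\hat{u}$ and both thresholds by $v_{th}^l$), and to write $\theta:=\tilde{v}_{th}$, so that $z[t]\sim U(0,1)$ are i.i.d., $\hat{u}[t]=2u[t-1]+z[t]$, and $u[t]=\hat{u}[t]-n[t]$ where $n[t]\in\mathbb{Z}$ is the signed spike count at step $t$; that is, the neuron adjusts the potential by integer multiples of $1$ until it lies in $(-\theta,\theta)$. The engine of the proof is a single distributional invariant, valid for \emph{every} $t\ge 1$ and \emph{irrespective} of $\theta$: the reduced potential $\hat{u}[t]\bmod 1$ is uniform on $[0,1)$. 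Both conclusions of the theorem are then read off from this invariant together with a description of where the reset with $\theta=\tfrac12$ places $u[t]$.

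I would prove the invariant by induction on $t$. For the base case, with the neuron starting from rest ($u[0]=0$) we have $\hat{u}[1]=z[1]\sim U(0,1)$, which is already uniform mod $1$. For the inductive step, reset only subtracts integers, so $u[t-1]\equiv\hat{u}[t-1]\pmod 1$ and hence $u[t-1]\bmod 1\sim U[0,1)$ by hypothesis; the doubling map $x\mapsto 2x$ pushes the uniform (Haar) measure on the circle $\mathbb{R}/\mathbb{Z}$ to itself, so $2u[t-1]\bmod 1\sim U[0,1)$; and convolving the uniform measure on the circle with the independent law of $z[t]$ again gives the uniform measure, so $\hat{u}[t]=2u[t-1]+z[t]$ satisfies $\hat{u}[t]\bmod 1\sim U[0,1)$. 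Note $\theta$ is never used in this step.

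Next I would specialize to $\theta=\tfrac12$ and show inductively that the reset confines $u[t]$ to $[-\tfrac12,\tfrac12)$: from $u[t-1]\in[-\tfrac12,\tfrac12)$ we get $\hat{u}[t]\in[-1,2)$, and adjusting by $\pm1$ until the residual has magnitude below $\tfrac12$ terminates inside $[-\tfrac12,\tfrac12)$ after at most two spikes. Combining this confinement with the invariant forces $u[t]\sim U(-\tfrac12,\tfrac12)$, whence $\mathbb{E}(u[t])=0$ and $\mathbb{E}(u[t]^2)=\tfrac1{12}=\tfrac1{12}v_{th}^2$. For the minimality claim I would prove a matching lower bound valid for \emph{all} $\theta$: since $u[t]=\hat{u}[t]-n[t]$ and $u[t]\bmod 1\sim U[0,1)$, write $u[t]=R[t]+m[t]$ with $R[t]\in[-\tfrac12,\tfrac12)$ the centred residue (so $R[t]\sim U(-\tfrac12,\tfrac12)$) and $m[t]\in\mathbb{Z}$; then $\mathbb{E}(u[t]^2)=\mathbb{E}(R[t]^2)+\mathbb{E}\!\big(m[t](2R[t]+m[t])\big)$, and a sign check ($m[t]\ge 1\Rightarrow 2R[t]+m[t]\ge 0$, and $m[t]\le -1\Rightarrow 2R[t]+m[t]<0$) shows the last term is $\ge 0$. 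Hence $\mathbb{E}(u[t]^2)\ge\mathbb{E}(R[t]^2)=\tfrac1{12}v_{th}^2$ for every $\tilde{v}_{th}$, with equality precisely when $m[t]\equiv 0$, which the choice $\tilde{v}_{th}=\tfrac12 v_{th}$ realizes; this also re-derives $\mathbb{E}(u[t])=0$ at the optimum.

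The step I expect to be the main obstacle is pinning down the invariant rigorously, i.e.\ that the doubling map and convolution both preserve the uniform measure on the circle; the cleanest route is via Fourier coefficients: the $k$-th Fourier coefficient of $\hat{u}[t]\bmod 1$ factors as that of $2u[t-1]\bmod 1$ times that of $z[t]$, and pushing forward under doubling sends coefficient index $k$ to $2k$, so every nonzero coefficient of $2u[t-1]\bmod 1$ originates from a nonzero coefficient of $u[t-1]\bmod 1$, which vanishes by the inductive hypothesis. A secondary technical point is checking that the (possibly multi-spike) TMN reset at $\theta=\tfrac12$ terminates inside the half-open band $[-\tfrac12,\tfrac12)$, which uses the a priori bound $\hat{u}[t]\in[-v_{th},2v_{th})$ coming from the inductive confinement of $u[t-1]$; the rest is bookkeeping.
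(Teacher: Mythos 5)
Your route is genuinely different from the paper's and, in one respect, stronger. The paper works directly with $\mathbb{E}(u[t+1])$ and $\partial_\alpha\mathbb{E}(u[t+1]^2)$ (with $\alpha=\tilde v_{th}/v_{th}$), modelling only a single positive spike per step and invoking explicit approximations: it assumes the carried potential is ``almost constrained within the threshold'' ($F_k(\alpha v_{th}^l)\approx 1$), replaces a truncated expectation by the full one, and pushes the expectation through $F_z$; the minimality claim then comes from the sign of a derivative in $\alpha$. Your argument replaces all of this with an exact distributional invariant --- $\hat u[t]\bmod v_{th}$ is uniform, because the reset subtracts integer multiples of $v_{th}$, the doubling map preserves Haar measure on the circle, and convolution with an independent input preserves it too --- and the decomposition $u[t]=R[t]+m[t]$ with the sign check $m(2R+m)\ge 0$ yields the clean universal bound $\mathbb{E}(u[t]^2)\ge \tfrac{1}{12}v_{th}^2$ for \emph{every} choice of $\tilde v_{th}$, something the paper's heuristic calculation does not give.

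However, there is a genuine gap at the attainment step, and it is a model mismatch rather than a technicality. You assume the reset ``adjusts the potential by integer multiples of $v_{th}$ until it lies in $(-\tilde v_{th},\tilde v_{th})$,'' allowing up to two unit corrections per timestep. The TMN defined in Equation~\ref{eq: tmn} (and Algorithm~\ref{alg: tmn forward}) emits at most \emph{one} ternary spike per timestep, so the reset subtracts at most one $v_{th}$. Under that dynamics your inductive confinement fails: from $u[t-1]\in[-\tfrac12 v_{th},\tfrac12 v_{th})$ one gets $\hat u[t]\in[-v_{th},2v_{th})$, and whenever $\hat u[t]\ge\tfrac32 v_{th}$ (e.g.\ $u[t-1]>\tfrac14 v_{th}$ and $z[t]$ near $v_{th}$; at $t=2$ this event has probability $\tfrac{1}{16}$) a single spike leaves $u[t]\in[\tfrac12 v_{th},v_{th})$. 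Consequently $u[t]$ is not exactly uniform on $[-\tfrac12 v_{th},\tfrac12 v_{th})$, $\mathbb{E}(u[t])$ is strictly positive for $t\ge 2$, and the equality case $m[t]\equiv 0$ of your lower bound is not realized exactly, so neither conclusion of the theorem follows for the actual model without further argument. To close this you would need either a quantitative bound showing the overflow event's contribution is small, or the same ``residual essentially confined'' idealization that the paper's own proof silently invokes; as it stands, your proof establishes the theorem exactly only for the multi-spike (full modular) reset variant, not for TMN as defined.
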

The complete proof is provided in Appendix \ref{apsec: thm threshold setting proof}. The overall predictive spiking process is illustrated on the right side of Figure~\ref{fig: tmn}(b), and Figure~\ref{fig: ps}(d)(e) demonstrate two representative examples where the method effectively adapts to different input patterns.

\subsection{Pre-Charge Phase}

A widely adopted strategy to ensure encoding precision is to introduce a dedicated pre-charge phase, during which neurons integrate all (or most) of the inputs before generating output spikes. This concept and its variants are commonly employed across various encoding schemes, including TTFS coding \cite{park2020t2fsnn,Stanojevic2022exact,Yang2023lcttfs}, weighted rate coding \cite{Rueckauer2021pattern,Christoph2021fs}, and certain rate coding implementations \cite{Hao2023cos}. Pipelined computation architectures, such as Intel's Loihi \cite{Davies2018Loihi}, can be adopted to support this design.

Following this principle, we incorporate a pre-charge phase to further mitigate the temporal uncertainty of inputs. Unlike prior works that rely on a full $T$-step pre-charge phase, our predictive threshold shortens this phase, improving the pipeline efficiency. Empirically, we find that a one-step pre-charge phase is sufficient to achieve optimal performance in convolutional architectures, while two steps are preferred for Transformer-based structures.

With all components integrated, we arrive at the complete TMN model:
\begin{equation}
    \left\{
    \begin{aligned}
        & \hat{u}[t]=2\times u[t-1]+z[t] \hspace{2.5cm} & \text{(Momentum)} \\
        & s[t]=\mathcal{H}(\hat{u}[t]-\tilde{v}_{th})-\mathcal{H}(-\hat{u}[t]-\tilde{v}_{th}) & \text{(Ternary Predictive Spiking)}\\
        & u[t]=\hat{u}[t]-\mathbbm{1}_{t\notin\mathbb{P}}\cdot v_{th}s[t] & \text{(Pre-Charge Phase)}
    \end{aligned}
    \right.
    \label{eq: tmn}
\end{equation}
where $\mathbb{P}$ denotes the set of timesteps belonging to the pre-charge phase. The forward method of TMN is provided in Algorithm 1 in Appendix~\ref{apsec: tmn forward}. Given that the spike pattern generated by TMN shares structural and weighting similarities with the canonical signed digit representation, we name the associated spike representation Canonical Signed Spike (CSS) encoding.


\section{Experiments}

To evaluate the encoding capability of TMN, we convert pre-trained ANNs into CSS-coded SNNs. The detailed conversion process is provided in Appendix~\ref{apsec: conversion details}. We begin with image classification, the most common benchmark for SNNs, and conduct experiments on CIFAR-10, CIFAR-100, and ImageNet. Following this, we evaluate the energy consumption. To further assess the contribution of the TPS mechanism, we conduct ablation studies to verify its impact on improving pipeline throughput. We also empirically validate Theorem~\ref{thm: threshold setting} and visualize the encoding errors of TMN. Finally, to demonstrate the broader applicability of our method, we extend it to a diverse set of tasks, including CNN-based object detection, RoBERTa for natural language processing tasks, and Vision Transformer models.

\subsection{Classic Image Classification Benchmarks}
In Table~\ref{tab: img cls}, we compare the timestep requirements of various coding schemes for image classification benchmarks. The ``No Quant.'' column indicates whether a quantized ANN is required, while the ``No Cal.'' column marks whether post-conversion fine-tuning is applied. Works utilizing negative spikes are denoted with the ``$\dagger$'' symbol. Since conversion loss is a more informative evaluation metric than raw accuracy, we also report the accuracy of the source ANNs used in each study.

When directly encoding full-precision activations, CSS coding with TMN achieves near-lossless conversion with significantly fewer timesteps. For instance, on the ImageNet dataset, Li et al. \cite{Li2021cal} reported a conversion loss exceeding 1\% for ResNet-34 with 256 timesteps, whereas TMN achieved only 0.3\% conversion loss with just 8 timesteps. Although FS coding \cite{Christoph2021fs} also leverages weighted spikes, it incurs higher conversion loss despite requiring more timesteps.

Recent SOTA performance in SNNs \cite{Hu2023fastsnn, Hao2023cos} is typically achieved by encoding quantized activations with conventional rate coding. TMN integrates naturally with the quantization-based pipeline and achieves ANN-level accuracy with fewer timesteps than rate coding, enabling the development of higher-performance SNNs. For example, TMN achieved 76.23\% accuracy on CIFAR-100 using only 2 timesteps. 

While competitive rate coding implementation heavily relies on low-bit quantization, TMN enables high-precision, low-timestep SNNs without aggressive quantization or fine-tuning. As a notable case, on CIFAR-10, we converted a full-precision ResNet-18 using only 6 timesteps with a minimal conversion loss of 0.06\%, without introducing any additional training overhead.

\subsection{Energy Consumption Analysis}

Beyond its demonstrated temporal efficiency, TMN translates this benefit into substantial energy savings. We quantitatively evaluate the energy consumption\footnote{Based on \url{https://github.com/iCGY96/syops-counter}} of the CSS coding approach, with the results summarized in Table~\ref{tab: energy consumption}. Fast-SNN \cite{Hu2023fastsnn} and COS \cite{Hao2023cos} are chosen as strong baselines representing signed and plain rate coding, respectively. The results presented are obtained from our own implementations based on the official repositories, ensuring identical pre-conversion ANN accuracy. As shown in the table, our method consistently outperforms both baselines on large-scale benchmarks like ImageNet and simpler ones like CIFAR-10. Notably, TMN reduces energy consumption by around 50\% while achieving higher accuracy.

\begin{table}[tp]
\caption{Number of timesteps under different neural coding schemes, evaluated on the CIFAR-10, CIFAR-100 and ImageNet datasets. ``No Quant.'' column indicates whether quantization is required. ``No Cal.'' column marks whether post-conversion fine-tuning is required.}
\label{tab: img cls}
\centering
\resizebox{\linewidth}{!}{
    \begin{threeparttable}
        \begin{tabular}{c|llllllll}
        \toprule   
        & \bfseries Method & \bfseries\makecell[l]{No \\ Quant.} & \bfseries\makecell[l]{No \\ Cal.} & \bfseries Architecture & \bfseries\makecell[l]{ANN \\ Acc.} & \bfseries\makecell[l]{Coding \\ Scheme} & \bfseries Timestep & \bfseries\makecell[l]{SNN \\ Acc.} \\
        \midrule
        \bfseries\multirow{15}{*}{\rotatebox{90}{CIFAR-10}}
        & OPI \cite{Bu2022opi} & \greencheck & \greencheck & ResNet-20 & 92.74\% & rate & 64 & 92.57\% \\
        & FS-Conversion \cite{Christoph2021fs} & \greencheck & \greencheck & ResNet-20 & 91.58\% & FS & 10 & 91.45\% \\
        & SNN Calibration \cite{Li2021cal} & \greencheck & \redcross & VGG-16 & 95.72\% & rate & 128 & 95.65\% \\
        & TSC \cite{Han2020tsc} & \greencheck & \greencheck & VGG-16 & 93.63\% & TSC & 512 & 93.57\% \\
        & TTFS Mapping \cite{Stanojevic2023ttfs} & \greencheck & \redcross & VGG-16 & 93.68\% & TTFS & 64 & 93.69\% \\
        & LC-TTFS \cite{Yang2023lcttfs} & \greencheck & \greencheck & VGG-16 & 92.79\% & TTFS & 50 & 92.72\% \\
        \cmidrule{2-9}
        & \multirow{3}{*}{\bfseries TMN\tnote{$\dagger$}} 
        & \greencheck & \greencheck & ResNet-20 & 93.83\% & \bfseries\multirow{3}{*}{CSS} & \bfseries 7 & 93.73\% \\
        && \greencheck & \greencheck & VGG-16 & 95.90\% && \bfseries 8 & 95.92\% \\
        && \greencheck & \greencheck & ResNet-18 & 96.68\% && \bfseries 6 & 96.62\% \\
        \cmidrule{2-9}
        & QFFS \cite{Li2022quantization}\tnote{$\dagger$} & \redcross & \greencheck & ResNet-18 & 93.12\% & rate & 4 & 93.13\% \\
        & QCFS \cite{Bu2023qcfs} & \redcross & \greencheck & ResNet-18 & 96.04\% & rate & 16 & 95.92\% \\
        & COS \cite{Hao2023cos} & \redcross & \greencheck & ResNet-18 & 95.64\% & rate & 4 & 95.46\% \\
        & Fast-SNN \cite{Hu2023fastsnn}\tnote{$\dagger$} & \redcross & \redcross & ResNet-18 & 95.62\% & rate & 7 & 95.57\% \\
        \cmidrule{2-9}
        & \bfseries TMN\tnote{$\dagger$} & \redcross & \greencheck & ResNet-18 & 96.32\% & \bfseries CSS & \bfseries 3 & 96.34\% \\
        \midrule\midrule
        \bfseries\multirow{10}{*}{\rotatebox{90}{CIFAR-100\hspace{20pt}}}
        & OPI \cite{Bu2022opi} & \greencheck & \greencheck & ResNet-20 & 70.43\% & rate & 64 & 69.96\% \\
        & TSC \cite{Han2020tsc} & \greencheck & \greencheck & ResNet-20 & 68.72\% & TSC & 1024 & 67.81\% \\
        & SNN Calibration \cite{Li2021cal} & \greencheck & \redcross & VGG-16 & 77.89\% & rate & 128 & 77.40\% \\
        & TTFS Mapping \cite{Stanojevic2023ttfs} & \greencheck & \redcross & VGG-16 & 72.23\% & TTFS & 64 & 72.24\% \\
        & LC-TTFS \cite{Yang2023lcttfs} & \greencheck & \greencheck & VGG-16 & 70.28\% & TTFS & 50 & 70.15\% \\
        \cmidrule{2-9}
        & \multirow{2}{*}{\bfseries TMN\tnote{$\dagger$}} 
        & \greencheck & \greencheck & ResNet-20 & 71.21\% & \bfseries\multirow{2}{*}{CSS} & \bfseries 7 & 71.07\% \\
        && \greencheck & \greencheck & VGG-16 & 75.95\% && \bfseries 8 & 75.93\% \\
        \cmidrule{2-9}
        & QCFS \cite{Bu2023qcfs} & \redcross & \greencheck & VGG-16 & 76.28\% & rate & 16 & 76.24\% \\
        & COS \cite{Hao2023cos} & \redcross & \greencheck & VGG-16 & 76.28\% & rate & 4 & 76.26\% \\
        \cmidrule{2-9}
        & \bfseries TMN\tnote{$\dagger$} & \redcross & \greencheck & VGG-16 & 76.28\% & \bfseries CSS & \bfseries 2 & 76.23\% \\
        \midrule\midrule
        \bfseries\multirow{11}{*}{\rotatebox{90}{ImageNet\hspace{20pt}}}
        & TSC \cite{Han2020tsc} & \greencheck & \greencheck & ResNet-34 & 70.64\% & TSC & 4096 & 69.93\% \\
        & SNN Calibration \cite{Li2021cal} & \greencheck & \redcross & ResNet-34 & 75.66\% & rate & 256 & 74.61\% \\
        & TSC \cite{Han2020tsc} & \greencheck & \greencheck & VGG-16 & 73.49\% & TSC & 1024 & 73.33\% \\
        & OPI \cite{Bu2022opi} & \greencheck & \greencheck & VGG-16 & 74.85\% & rate & 256 & 74.62\% \\
        \cmidrule{2-9}
        & \multirow{2}{*}{\bfseries TMN\tnote{$\dagger$}}
        & \greencheck & \greencheck & ResNet-34 & 76.42\% & \bfseries\multirow{2}{*}{CSS} & \bfseries 8 & 76.10\% \\
        && \greencheck & \greencheck & VGG-16 & 75.34\% && \bfseries 8 & 75.17\% \\
        \cmidrule{2-9}
        & QFFS \cite{Li2022quantization}\tnote{$\dagger$} & \redcross & \greencheck & VGG-16 & 73.08\% & rate & 8 & 73.10\% \\
        & QCFS \cite{Bu2023qcfs} & \redcross & \greencheck & VGG-16 & 74.29\% & rate & 256 & 74.22\% \\
        & COS \cite{Hao2023cos} & \redcross & \greencheck & VGG-16 & 74.19\% & rate & 16 & 74.09\% \\
        & Fast-SNN \cite{Hu2023fastsnn}\tnote{$\dagger$} & \redcross & \redcross &VGG-16 & 73.02\% & rate & 7 & 72.95\% \\
        \cmidrule{2-9}
        & \bfseries TMN\tnote{$\dagger$} & \redcross & \greencheck & VGG-16 & 74.33\% & \bfseries CSS & \bfseries 5 & 74.32\% \\
        \bottomrule
        \end{tabular}
    \begin{tablenotes}
        \footnotesize
        \item[$\dagger$] Utilize negtive spikes
    \end{tablenotes}
    \end{threeparttable}
    }
\end{table}

\begin{table}[tbh]
\caption{Energy consumption of VGG-16 on CIFAR-10 and ImageNet. The results for COS and Fast-SNN are obtained using their open-source code, ensuring the same pre-conversion ANN accuracy.}
\label{tab: energy consumption}
\centering
\resizebox{0.9\textwidth}{!}{
    \begin{threeparttable}
        \begin{tabular}{c|lllllll}
        \toprule
        & \bfseries Method & \bfseries\makecell[l]{Coding \\ Scheme} & \bfseries Timestep & \bfseries Accuracy & \bfseries SyOPs (ACs) & \bfseries MACs & \bfseries\makecell[l]{Energy \\ Consumption}\\
        \midrule
        \bfseries\multirow{4}{*}{\rotatebox{90}{CIFAR-10\hspace{2pt}}}
        & ANN & - & - & 95.61\% & - & 313.60M & 1.4426mJ \\
        & COS \cite{Hao2023cos} & rate & 6 & 95.51\% & 258.35M & 11.01M & 0.2918mJ \\
        & Fast-SNN \cite{Hu2023fastsnn}\tnote{$\dagger$} & rate & 7 & 95.62\% & 295.1M & 12.85M & 0.3247mJ \\
        \cmidrule{2-8}
        & \bfseries TMN\tnote{$\dagger$} & \bfseries CSS & 3 & 95.65\% & 149.83M & 5.51M & \bfseries 0.1602mJ \\
        \midrule\midrule
        \bfseries\multirow{4}{*}{\rotatebox{90}{ImageNet\hspace{3pt}}}
        & ANN & - & - & 74.33\% & - & 15.49G & 71.25mJ \\
        & COS \cite{Hao2023cos} & rate & 7 & 73.69\% & 16.97G & 629.41M & 18.17mJ \\
        & Fast-SNN \cite{Hu2023fastsnn}\tnote{$\dagger$} & rate & 8 & 73.54\% & 19.28G & 719.32M & 20.77mJ \\
        \cmidrule{2-8}
        & \bfseries TMN\tnote{$\dagger$} & \bfseries CSS & 3 & 73.72\% & 8.69G & 269.75M & \bfseries 9.062mJ \\
        \bottomrule
        \end{tabular}
    \begin{tablenotes}
        \footnotesize
        \item[$\dagger$] Utilize negtive spikes
    \end{tablenotes}
    \end{threeparttable}
}
\end{table}

\begin{table}[tbh]
\caption{CSS coding can be generalized to diverse architectures and tasks. Entries such as $n$-bit and Level-$n$ in the `Architecture' column refer to the quantization precision of the original ANN. Results for Fast-SNN and SpikeZIP-TF are reproduced using their publicly available codebases.}
\label{tab: combo}
\centering
\resizebox{\linewidth}{!}{
    \begin{threeparttable}
        \begin{tabular}{l|llllll}
        \toprule
        \multicolumn{7}{c}{\bfseries Obeject Detection} \\ 
        \midrule 
        \bfseries Dataset &\bfseries Method & \bfseries Architecture & \bfseries ANN mAP & \bfseries\makecell[l]{Coding \\ Scheme} & \bfseries Timestep & \bfseries SNN mAP \\
        \midrule
        \bfseries\multirow{4}{*}{VOC2007\vspace{-6pt}}
        & \multirow{2}{*}{Fast-SNN \cite{Hu2023fastsnn}} & YOLOv2 (ResNet-34-3b) & 75.27 & \multirow{2}{*}{rate} & 7 & 73.43 \\
        && YOLOv2 (ResNet-34-4b) & 76.16 && 15 & 76.05 \\
        \cmidrule{2-7}
        & \multirow{2}{*}{\bfseries TMN} & YOLOv2 (ResNet-34-3b) & 75.27 & \multirow{2}{*}{\bfseries CSS} & \bfseries 3 & 75.20 \\
        && YOLOv2 (ResNet-34-4b) & 76.16 && \bfseries 4 & 76.18 \\
        \midrule
        \multicolumn{7}{c}{\bfseries Image Classification} \\ 
        \midrule 
        \bfseries Dataset & \bfseries Method & \bfseries Architecture & \bfseries ANN Acc. & \bfseries\makecell[l]{Coding \\ Scheme} & \bfseries Timestep & \bfseries SNN Acc. \\
        \midrule
        \bfseries\multirow{2}{*}{ImageNet\vspace{-6pt}}
        & SpikeZIP-TF \cite{you2024spikeziptf} & ViT-Small-Level32 & 81.56\% &rate & 64 & 81.45\% \\
        \cmidrule{2-7}
        & \bfseries TMN
        & ViT-Small-Level32 & 81.56\% & \bfseries CSS & \bfseries 5 & 81.51\% \\
        \midrule
        \multicolumn{7}{c}{\bfseries Sentiment Analysis} \\ 
        \midrule 
        \bfseries Dataset & \bfseries Method & \bfseries Architecture & \bfseries ANN Acc. & \bfseries\makecell[l]{Coding \\ Scheme} & \bfseries Timestep & \bfseries SNN Acc. \\
        \midrule
        \bfseries\multirow{2}{*}{SST-2\vspace{-6pt}}
        & SpikeZIP-TF \cite{you2024spikeziptf} & RoBERTa-Base-Level32 & 92.32\% & rate & 64 & 92.32\% \\
        \cmidrule{2-7}
        & \bfseries TMN & RoBERTa-Base-Level32 & 92.32\% & \bfseries CSS & \bfseries 5 & 92.32\% \\
        \midrule\midrule
        \bfseries\multirow{2}{*}{IMDB-MR\vspace{-6pt}}
        & SpikeZIP-TF \cite{you2024spikeziptf} & RoBERTa-Base-Level32 & 81.41\% & rate & 64 & 81.31\% \\
        \cmidrule{2-7}
        & \bfseries TMN & RoBERTa-Base-Level32 & 81.41\% & \bfseries CSS & \bfseries 5 & 81.36\% \\
        \bottomrule
        \end{tabular}
    \end{threeparttable}
}
\end{table}

\begin{figure}[tbh]
    \centering
    \includegraphics[width=\linewidth]{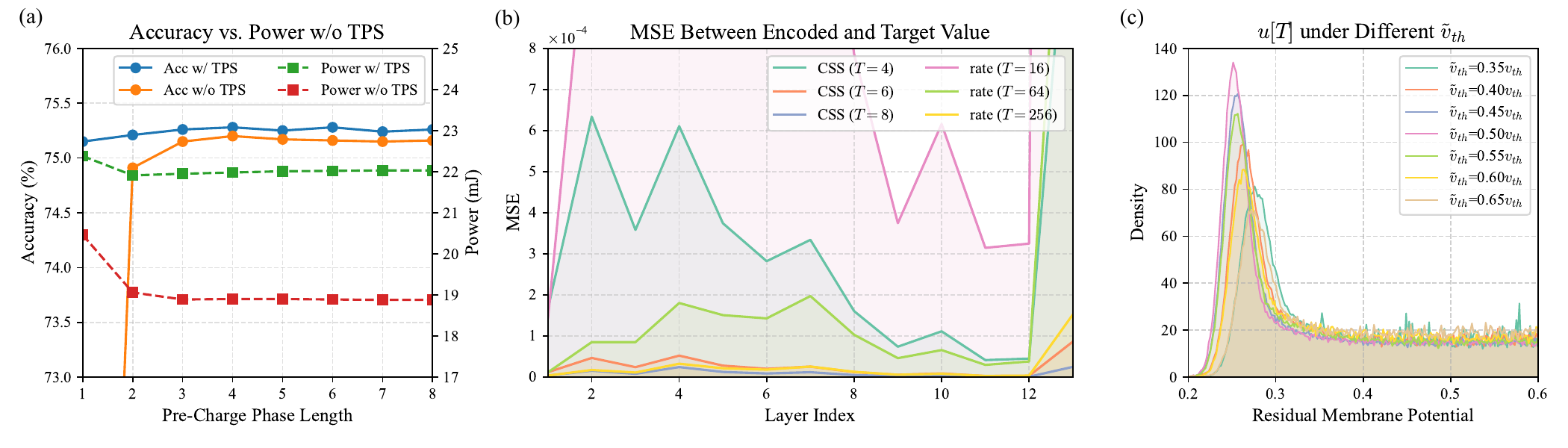}
    \caption{Ablation and Analysis of TMN. (a) Joint analysis of the TPS mechanism and the pre-charge phase. Solid lines correspond to the left y-axis (accuracy), and dashed lines correspond to the right y-axis (power). The TPS mechanism enables a reduction in pre-charge duration while improving model accuracy. However, predictive spiking introduces a small increase in energy consumption. (b)Layer-wise encoding error of CSS coding and rate coding under different timestep settings. Three timestep settings are selected for each scheme, with the timesteps for rate coding being exponential multiples of those for CSS. TMN exhibits an exponential advantage over linear coding schemes such as rate coding. (c) Impact of predictive threshold on the distribution of residual membrane potential. Setting $\tilde{v}_{th}=\frac{1}{2}v_{th}$ most effectively constrains the membrane potential within a narrow range, confirming the effectiveness of predictive spiking.}
    \label{fig: ablation}
\end{figure}

\subsection{Ablation and Theoretical Validation of TMN}

The TMN model consists of three important components: momentum-based weighting, the TPS mechanism, and the pre-charge phase. we conduct an ablation study to assess their individual and joint contributions. In Figure~\ref{fig: ablation}(a), we vary both the pre-charge length $P$ and the activation of the TPS mechanism to investigate their combined impact on TMN’s performance. From the accuracy curves, introducing TPS permits a shorter pre-charge ($P=1$, blue solid line) to match the performance of a longer pre-charge ($P=3$, orange solid line), which effectively improves system throughput under a pipelined setup. Moreover, predictive spiking better constrains the encoding error, as indicated by the blue solid line consistently staying above the orange solid line. However, predictive spiking slightly increases energy consumption, as shown by the dashed lines. This illustrates a trade-off between accuracy/throughput and energy efficiency.

In Figure~\ref{fig: ablation}(b), we demonstrate TMN's exponential enhancement in encoding capability by comparing it against a straightforward rate-coding scheme. For each coding approach, we vary the number of timesteps $T$ and plot the resulting mean squared error (MSE) of the encoding versus network depth. The results show that TMN achieves lower mean squared error with $T$ steps than rate coding obtains with $2^T$ steps, demonstrating the efficiency of the momentum-based weighting. Figure~\ref{fig: ablation}(c) offers empirical support for Theorem~\ref{thm: threshold setting}. We introduce perturbations in the predictive threshold $\tilde{v}_{th}$ around $\frac{1}{2}v_{th}$, and then plot the distribution of the average residual membrane potential $u[T]$ across all neurons under each threshold variant. The results show that setting $\tilde{v}_{th}=\frac{1}{2} v_{th}$ most effectively constrains the residual membrane potential, consistent with our theoretical analysis.

\subsection{TMN for Diverse Tasks and Architectures}

Additional experiments are conducted across various tasks to demonstrate the scalability and versatility of TMN. We benchmark our method against SOTA rate coding implementations using identical ANN weights, isolating the effect of the coding scheme. In object detection tasks, CSS coding not only reduces the number of timesteps required for activation encoding but also alleviates conversion error. For instance, TMN achieves an mAP improvement of 1.77 points (2.4\%) over Fast-SNN using a YOLOv2 framework with 3-bit ResNet-34 backbone,  Similar advantages are observed in ViT-based image classification and RoBERTa-based sentiment analysis tasks: TMN enables exponential compression of timesteps while maintaining or even exceeding baseline performance.

The fundamental nature of spike representation in SNNs suggests broad applicability of our method across various ANN-SNN conversion scenarios. A key advantage of TMN lies in its integration of weighting process in the neuron's dynamics. This makes the transition from widely used rate coding to CSS coding straightforward---requiring only the replacement of IF neurons with TMNs.


\section{Conclusion}

In this work, we propose the Temporal Momentum Neuron, a novel spiking neuron model that features momentum-based weighting and ternary predictive spiking. This neuron generates CSS-coded patterns which achieves efficient nonlinear information representation. Through theoretical analyses, diverse task benchmarks and a series of ablation studies, we demonstrate that TMN significantly reduces the number of encoding timesteps without sacrificing accuracy. The predictive spiking mechanism further improves computation efficiency and supports a energy-performance trade-off by shortening the pre-charge phase. Moreover, we show that TMN can be integrated into various network architectures, including CNNs and Transformers, highlighting its scalability and general applicability for spike encoding. Overall, TMN provides a principled and practical path toward temporally compact, high-fidelity spike representations, paving the way for more efficient and powerful spiking neural networks.


\newpage
\bibliographystyle{splncs04}
\bibliography{refs}


\appendix
\theoremstyle{plain}
\newtheorem{theoremap}{Theorem}[section]
\newcounter{appendixeq}
\newenvironment{appendixeq}{\refstepcounter{appendixeq}\equation}{\tag{A\theappendixeq}\endequation}

\section{Related Works}
\label{apsec: related works}

\subsection{Spike Coding Schemes}
Current mainstream coding schemes in converted SNNs include rate coding and TTFS coding. 

Rate coding represents activity by the number of spikes within a time window. Early methods aimed at reducing conversion loss, such as weight normalization \cite{Diehl2015norm}, threshold rescaling \cite{Sengupta2019vthscale}, and soft-reset neurons \cite{Han2020rmp}. More recent work focuses on reducing timesteps by optimizing neuron parameters: Meng et al. \cite{Meng2022ttrbr} introduced the threshold tuning method, while Bu et al. \cite{Bu2022opi} proposed optimizing the initial membrane potential. Additionally, recent works have explored quantizing the ANNs before conversion \cite{Bu2023qcfs,Hao2023cos,Hu2023fastsnn}. This approach directly reduces the number of activations that need to be mapped, providing an alternative way to minimize timesteps. 

Rueckauer \& Liu \cite{Rueckauer2018ttfs} were the first to attempt converting an ANN to a TTFS-based SNN, achieving increased sparsity but with significant conversion errors. Stanojevic et al. \cite{Stanojevic2022exact} showed that exact mapping is feasible. Yang et al. \cite{Yang2023lcttfs} improved this by using dynamic neuron threshold and weight regularization and completed the conversion with 50 timesteps per layer. Han \& Roy \cite{Han2020tsc} introduced the Temporal-Switch-Coding (TSC) scheme, where the time interval between two spikes encodes activation. However, in the above approaches, the improvement in encoding precision relies on a linear increase in the number of timesteps, which limits the performance of the converted SNN at low timesteps.

Using weighted spikes to represent activation values remains an underexplored area. St\"ockl \& Maass \cite{Christoph2021fs} and Rueckauer \& Liu \cite{Rueckauer2021pattern} employed spikes to encode the ``1''s in the binary represented activations. However, both approaches require neurons to wait for all input spikes before firing, resulting in high output latency. Kim et al. \cite{Jaehyun2018phase} sought to reduce encoding errors by repeatedly applying inputs, which requires thousands of timesteps. In contrast, by introducing TPS mechanism, we achieve fast and accurate computation.

\subsection{Negtive Spikes}
The role of negative spikes in SNNs has been widely studied in recent works. In rate-based ANN-SNN conversion \cite{Li2022quantization, Wang2022signed, Hu2023fastsnn}, negative spikes have been utilized to enhance neuron adaptability to input fluctuations, which improves conversion accuracy. Guo et al. \cite{Guo2023ternary} proposed ternary SNNs to increase the expressive capacity of spike sequences and trained them directly using gradient descent. Unlike existing approaches, we intentionally design neuron over-firing and employ negative spikes for correction. We demonstrate that this strategy facilitates efficient computation with weighted spikes.

\section{Mathematical Proofs}

\subsection{Proof of Proposition~\ref{prop: weight eqv}}
\begin{proposition}
The momentum-based dynamics described in Equation~\ref{eq: momentum weight} are equivalent to integrating spikes weighted by an exponential kernel $2^{T-t}$.
\end{proposition}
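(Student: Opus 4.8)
The plan is to make the word ``equivalent'' precise through a time-dependent rescaling of the membrane potential that turns the momentum recursion with a \emph{fixed} threshold into an ordinary integrate-and-fire neuron fed by \emph{exponentially pre-weighted} inputs. First, for intuition I would unroll the momentum dynamics of Equation~\ref{eq: momentum weight} together with the soft reset of Equation~\ref{eq: spiking neuron model}, i.e. $u[t]=2u[t-1]+z[t]-v_{th}s[t]$ with $u[0]=0$. A one-line induction gives
\begin{equation}
\hat{u}[t]=z[t]+\sum_{\tau=1}^{t-1}2^{t-\tau}\bigl(z[\tau]-v_{th}s[\tau]\bigr),
\end{equation}
so at the final step $t=T$, and in the regime where the neuron has not yet emitted (the pre-charge picture), this collapses to $\hat{u}[T]=\sum_{\tau=1}^{T}2^{T-\tau}z[\tau]$, which is precisely integration under the exponential kernel $2^{T-\tau}$. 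Since $z[\tau]$ aggregates presynaptic spikes as in Equation~\ref{eq: integrated inputs}, a spike arriving at time $\tau$ enters $\hat{u}[T]$ with effective weight $2^{T-\tau}v_{th}^{l-1}w_{ij}$.

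To cover the general case with firing, I would introduce the rescaled variables $U[t]:=2^{T-t}u[t]$, $\hat{U}[t]:=2^{T-t}\hat{u}[t]$, $Z[t]:=2^{T-t}z[t]$, and the (time-varying) threshold $V_{th}[t]:=2^{T-t}v_{th}$. Multiplying the three TMN update equations by $2^{T-t}$ and using $2^{T-t}\cdot 2=2^{T-(t-1)}$, the momentum term becomes a plain carry-over, $\hat{U}[t]=U[t-1]+Z[t]$, the firing test becomes $s[t]=\mathcal{H}\bigl(\hat{U}[t]-V_{th}[t]\bigr)$, and the reset becomes $U[t]=\hat{U}[t]-V_{th}[t]\,s[t]$. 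This is exactly the momentum-free neuron of Equation~\ref{eq: spiking neuron model} driven by the exponentially weighted inputs $Z[t]=2^{T-t}z[t]$ with the correspondingly scaled threshold — i.e. the weighted-spike scheme depicted in Figure~\ref{fig: spk repr}(c). Because $u\mapsto U$ is a bijection at every timestep and leaves the spike variable $s[t]$ untouched, the two systems emit identical spike trains; and at $t=T$ the rescaling factor is $2^{0}=1$, so the decoded outputs coincide as well.

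The main obstacle — really the only point needing care — is bookkeeping around the reset and pinning down the right notion of equivalence: one must check that the Heaviside comparison and the reset subtraction both pick up the same factor $2^{T-t}$ so that it cancels consistently, and one should state the result as ``same spike train / same decoded activation'' rather than ``same instantaneous potential'' (the instantaneous potentials differ by the known factor $2^{T-t}$, which is exactly the point). The bias term inside $z[t]$ needs no separate treatment since it rides along inside $Z[t]$, and the TPS and pre-charge refinements of Equation~\ref{eq: tmn} are outside the scope of this proposition, so nothing extra is required there.
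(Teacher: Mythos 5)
Your proposal is correct, and it actually contains the paper's argument as its first half: the paper's proof is exactly the unrolling you present, iterating $u[t]=2u[t-1]+z[t]-v_{th}s[t]$ from $u[0]=0$ to obtain $u[T]=\sum_{t=1}^{T}2^{T-t}\bigl(z[t]-v_{th}s[t]\bigr)$ (resets included, not only the pre-firing regime), and then substituting the PSP definition of Equation~\ref{eq: integrated inputs} so that each presynaptic spike appears with effective weight $2^{T-t}v_{th}^{l-1}w_{ij}^{l}$; the paper stops there and reads off the equivalence from this final-potential identity. Where you genuinely depart from the paper is the second half: the change of variables $U[t]=2^{T-t}u[t]$, $Z[t]=2^{T-t}z[t]$, $V_{th}[t]=2^{T-t}v_{th}$, which converts the fixed-threshold momentum neuron into a momentum-free integrate-and-fire neuron driven by exponentially pre-weighted inputs with a time-varying threshold --- precisely the direct-weighting scheme of Figure~\ref{fig: spk repr}(c). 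This buys a sharper notion of equivalence than the paper states explicitly: because the Heaviside comparison and the reset scale by the same positive factor, the two systems emit identical spike trains at every timestep, not merely the same accumulated quantity at $t=T$. Conversely, the paper's shorter route has the advantage of directly exhibiting the weighted-spike decomposition used later in the conversion pipeline. Both arguments are sound; your rescaling step is a welcome formalization of what ``equivalent'' means in the presence of firing and soft resets.
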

\begin{proof}
We proof for the $i$-th neuron in the $l$-th layer. The specific form of the operator $\mathcal{D}$ in Equation~\ref{eq: spiking neuron model} is defined by Equation~\ref{eq: momentum weight}. With the initial condition $u_i^l[0]=0$, we can write:
\begin{appendixeq}
    \hat u_i^l[1]=z_i^l[1]
\end{appendixeq}
Substituting $\hat u_i^l[1]$ into the third equation of Equation~\ref{eq: spiking neuron model}, we have:
\begin{appendixeq}
    u_i^l[1] = z_i^l[1] - v_{th}^l s_i^l[1]
\end{appendixeq}
Next, we derive the expression for $u_i^l[2]$ by substitute the above into Equation~\ref{eq: spiking neuron model}:
\begin{appendixeq}
    u_i^l[2]=2\times(z_i^l[1]-v_{th}^ls_i^l[1])+z_i^l[2]-v_{th}^ls_i^l[2]
\end{appendixeq}
We can generalize this process to iteratively compute the membrane potential up to $t=T$:
\begin{appendixeq}
    u_i^l[T]=\sum_{t=1}^T2^{T-t}(z_i^l[t]-v_{th}^ls[t])
\end{appendixeq}
Substituting $z_i^l[t]$ with Equation~\ref{eq: integrated inputs} and reorganizing the terms, we get:
\begin{appendixeq}
    u_i^l[T]=\sum_i w_{ij}^l\sum_{t=1}^{T}2^{T-t}v_{th}^{l-1}s_{j}^{l-1}[t]+\sum_{t=1}^{T}2^{T-t}b_{i}^l-\sum_{t=1}^{T}2^{T-t}v_{th}^ls_{i}^l[t]
\label{eq: stepwise weighting equivalence}
\end{appendixeq}
Based on Equation~\ref{eq: stepwise weighting equivalence}, we can conclude that the momentum-based weighting is equivalent to integrating spikes weighted by an exponential kernel $2^{T-t}$.
\end{proof}

\subsection{Proof of Theorem~\ref{thm: threshold setting}}
\label{apsec: thm threshold setting proof}

\begin{theorem}
Let the integrated input $z_i^l[t]$ at each timestep be independent and follow a uniform distribution, $U(0,v_{th}^l)$. When $\tilde{v}_{th}=\frac{1}{2}v_{th}$, for all $t\in\{1,2,\ldots,T\}$, we have:
\begin{equation}
\notag
\mathbb{E}(u_i^l[t])=0\quad\mathrm{and}\quad\mathbb{E}(u_i^l[t]^2)\mathrm{\ is\ minimized.}
\end{equation}
\notag
\end{theorem}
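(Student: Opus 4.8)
The plan is to prove the stronger fact that, with $\tilde v_{th}=\tfrac12 v_{th}$ and $u_i^l[0]=0$, the post-reset potential $u_i^l[t]$ is uniformly distributed on $(-\tfrac12 v_{th},\tfrac12 v_{th})$ for every $t\ge 1$. Both conclusions then follow at once: a centred uniform law has mean $0$ and second moment $v_{th}^2/12$, and --- as the minimality argument below shows --- no admissible choice of predictive threshold makes the second moment smaller. By homogeneity I rescale to $v_{th}=1$, so $z_i^l[t]\sim U(0,1)$, $\tilde v_{th}=\tfrac12$, and the reset reads $u[t]=\hat u[t]-s[t]$ with $s[t]\in\{-1,0,1\}$. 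A direct moment recursion is possible but clumsy; instead I would work on the circle $\mathbb R/\mathbb Z$.

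Three observations do the work. (i) The reset is forced to be congruence-preserving: from $u[t]=\hat u[t]-s[t]$ and $\hat u[t]=2u[t-1]+z[t]$ one gets $u[t]\equiv 2u[t-1]+z[t]\pmod 1$ regardless of $s[t]$, and unrolling with $u[0]=0$ gives $u[t]\equiv\sum_{s=1}^{t}2^{t-s}z[s]\pmod 1$, so $u[t]$ reduced modulo $1$ is a function of the inputs alone. (ii) Circle lemma: if $z\sim U(0,1)$ is independent of a real random variable $W$, then $W+z$ is uniformly distributed on $\mathbb R/\mathbb Z$, because convolving any law on the circle with the uniform law returns the uniform law; taking $W=\sum_{s=1}^{t-1}2^{t-s}z[s]$, the law of $u[t]$ modulo $1$ is uniform on the circle. (iii) The rule ``fire $+1$ above $\tfrac12$, fire $-1$ below $-\tfrac12$, reset by $1$'' subtracts from $\hat u[t]$ the nearest integer, i.e. it returns the representative of the residue class lying in $[-\tfrac12,\tfrac12)$. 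Chaining (i)--(iii): $u[t]$ is the centred representative of a circle-uniform class, hence $u[t]\sim U(-\tfrac12,\tfrac12)$ and $\mathbb E(u_i^l[t])=0$ for all $t$.

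For minimality I use the same congruence structure but now allow an arbitrary predictive threshold $c\in(0,v_{th}]$. Whatever $c$ is, the reset only ever subtracts an integer multiple of $v_{th}$, so $u_i^l[t]$ is pinned to the fixed residue class $r[t]:=\sum_{s=1}^{t}2^{t-s}z_i^l[s]\ (\mathrm{mod}\ v_{th})$; therefore $u_i^l[t]^2\ge d\!\left(r[t],v_{th}\mathbb Z\right)^2$ pointwise, where $d(\cdot,v_{th}\mathbb Z)$ is the distance to the nearest multiple of $v_{th}$. The choice $\tilde v_{th}=\tfrac12 v_{th}$ is exactly the one that attains this lower bound at every step --- it resets to the nearest multiple --- so it minimises $\mathbb E(u_i^l[t]^2)$, the minimum value being $\mathbb E\,d\!\left(r[t],v_{th}\mathbb Z\right)^2=v_{th}^2/12$ by the circle lemma. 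Taking expectations of the pointwise inequality yields the comparison against every competing threshold.

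The genuine obstacle is hidden in observation (iii): a single $\pm1$ spike resets to the nearest integer only while $\hat u_i^l[t]\in[-\tfrac32 v_{th},\tfrac32 v_{th})$, whereas $u_i^l[t-1]\in(-\tfrac12 v_{th},\tfrac12 v_{th})$ together with $z_i^l[t]\in[0,v_{th})$ only force $\hat u_i^l[t]\in(-v_{th},2v_{th})$, leaving an over-shoot wedge $[\tfrac32 v_{th},2v_{th})$ --- of probability $\tfrac1{16}$ under the uniform model --- on which one spike undershoots the nearest multiple by $v_{th}$ and $u_i^l[t]$ lands in $[\tfrac12 v_{th},v_{th})$. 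Closing this rigorously requires either adopting a multi-spike reset so that reduction modulo $v_{th}$ is realised exactly, or carrying the $O(\tfrac1{16})$ correction through the induction; since the wedge is one-sided, under the latter route a small positive bias in $\mathbb E(u_i^l[t])$ must be bounded rather than shown to vanish exactly. I would present the clean argument under the first convention and remark that the correction only perturbs the stated constants.
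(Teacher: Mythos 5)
Your route is genuinely different from the paper's. The paper works with the density of $z$ directly: it writes $\mathbb{E}(u[t+1])$ and $\mathbb{E}(u[t+1]^2)$ as integrals over the input distribution conditioned on $k=2u[t]$, introduces the explicit simplifying assumption that $k$ is ``almost constrained within the threshold'' ($F_k(\alpha v_{th})\approx 1$), shows $\mathbb{E}(u[1])=0$ and propagates this by iteration, and for the second moment differentiates in $\alpha=\tilde v_{th}/v_{th}$ (with the law of $k$ held fixed) to find the sign change at $\alpha=\tfrac12$. Your argument instead exploits the algebraic structure: since every reset subtracts an integer multiple of $v_{th}$ and doubling respects congruences, $u[t]$ is pinned to the residue class $\sum_{s\le t}2^{t-s}z[s]\ (\mathrm{mod}\ v_{th})$, which is exactly uniform on the circle because $z[t]\sim U(0,v_{th})$ is independent of the past; the threshold choice then only decides which representative of that class survives, so $u[t]^2\ge d(r[t],v_{th}\mathbb{Z})^2$ pointwise for every admissible threshold. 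This is cleaner in one important respect: your optimality comparison is a pointwise bound against a threshold-independent quantity, whereas the paper's derivative-in-$\alpha$ step quietly ignores that the distribution $q(k)$ itself depends on $\alpha$.

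The gap you flag, however, is real and is exactly where your proof stops short of the stated theorem. For the single-spike ternary reset of Equation~\ref{eq: tmn}, the event $\hat u\in[\tfrac32 v_{th},2v_{th})$ (your probability-$\tfrac1{16}$ wedge) leaves $u[t]\in[\tfrac12 v_{th},v_{th})$, so the ``reset to the nearest multiple'' characterization fails on a set of positive probability: your stronger claim $u[t]\sim U(-\tfrac12 v_{th},\tfrac12 v_{th})$ is false for the model as defined, exact $\mathbb{E}(u[t])=0$ does not follow, and the attainment half of your minimality argument (that $\alpha=\tfrac12$ achieves the pointwise lower bound) also breaks on the wedge. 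Your two repairs do not rescue the statement as written: the multi-spike convention proves a theorem about a different neuron, and carrying the one-sided $O(\tfrac1{16})$ correction only bounds a positive bias rather than showing it vanishes. It is worth noting that the paper's own proof dodges the same wedge by fiat --- the assumption $F_k(\alpha v_{th})\approx 1$ and the subsequent ``$\approx$'' steps assume the membrane potential essentially never escapes the threshold band --- so the exact equalities in the theorem are only established there up to the same kind of approximation. To submit your argument as a proof of this theorem you would need either to prove that the wedge event has zero probability (it does not), or to state explicitly, as the paper does, that the result holds under an approximation that neglects it.
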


\begin{proof}
Let $\alpha=\nicefrac{\tilde{v}_{th}}{v_{th}}$. Let $p(z)$ denote the probability density function of the input $z_i^l[t]$. Define $k_i^l[t]=2\times u_i^l[t]$. For simplicity, we will drop both neuron and layer index and denote $z[t]$ and $k[t]$ as $z$ and $k$, respectively. According to Equation~\ref{eq: tmn}, we have:
\begin{appendixeq}
\begin{aligned}
\mathbb{E}(u[t+1])& =\mathbb{E}_k\left(\int_{-\infty}^{\infty}(z+k)p(z)\mathrm{d}z-v_{th}^l\int_{\alpha v_{th}^l-k}^{\infty}p(z)\mathrm{d}z\right) \\
& = \mathbb{E}_k\left(E(z)+k-v_{th}^l\int_{\alpha v_{th}^l-k}^{\infty}p(z)\mathrm{d}z\right) \\
& = \mathbb{E}(z)+\mathbb{E}(k)-v_{th}^l\mathbb{E}_k\left(\int_{\alpha v_{th}^l-k}^{\infty}p(z)\mathrm{d}z\right)
\end{aligned}
\end{appendixeq}
Let $q(k)$ denote the probability density function of $k$. We can write:
\begin{appendixeq}
\begin{aligned}
\mathbb{E}_k\left(\int_{\alpha v_{th}^l-k}^{\infty}p(z)\mathrm{d}z\right)& =\int_{-\infty}^{(\alpha-1)v_{th}^l}q(k)\int_{\alpha v_{th}^l-k}^{\infty}p(z)\mathrm{d}z\mathrm{d}k \\
& \hspace{40pt}+\int_{(\alpha-1)v_{th}^l}^{\alpha v_{th}^l}q(k)\int_{\alpha v_{th}^l-k}^{\infty}p(z)\mathrm{d}z\mathrm{d}k+\int_{\alpha v_{th}^l}^{\infty}q(k)\int_{\alpha v_{th}^l-k}^{\infty}p(z)\mathrm{d}z\mathrm{d}k \\
& =\int_{(\alpha-1)v_{th}^l}^{\alpha v_{th}^l}q(k)\int_{\alpha v_{th}^l-k}^{\infty}p(z)\mathrm{d}z\mathrm{d}k+\int_{\alpha v_{th}^l}^{\infty}q(k)\mathrm{d}k \\
& = \mathbb{\tilde E}_k\left(F_z(\infty)-F_z(\alpha v_{th}^l-k)\right)+F_k(\infty)-F_k(\alpha v_{th}^l)
\end{aligned}
\end{appendixeq}
where $F_z(\cdot)$ and $F_k(\cdot)$ denote the cumulative distribution functions of $z$ and $k$, respectively. $\mathbb{\tilde E}_k$ denotes the expectation of $k$ within the target range. Note that $Z\sim U(0,v_{th}^l)$, so $F_z(\cdot)$ is linear. We further assume that $k$ is almost constrained within the threshold, i.e., $F_k(\alpha v_{th}^l)\approx 1$. Therefore, we have:
\begin{appendixeq}
\mathbb{\tilde E}_k\left(\int_{\alpha v_{th}^l-k}^{\infty}p(z)\mathrm{d}z\right)\approx 1-F_z(\alpha v_{th}^l-\mathbb{\tilde E}(k))\approx 1-F_z(\alpha v_{th}^l-\mathbb{E}(k))
\end{appendixeq}
and
\begin{appendixeq}
\mathbb{E}(u[t+1])=\mathbb{E}(z)+\mathbb{E}(k)-v_{th}^l\left(1-F_z(\alpha v_{th}^l-\mathbb{E}(k))\right)
\label{eq: exp iter}
\end{appendixeq}
Note that $k[0]=0$ and $F_z\left(\frac{1}{2}v_{th}^l\right)=\frac{1}{2}$. When $\alpha=\frac{1}{2}$, we have:
\begin{equation}
\mathbb{E}(u[1])=\mathbb{E}(z)-v_{th}^l\left(1-F_z(\frac{1}{2}v_{th}^l)\right)=0=\mathbb{E}(k[1])
\end{equation}
By repeatedly applying Equation~\ref{eq: exp iter}, we can conclude that for all $t\in\{1,2,\ldots,T\}$, $\mathbb{E}(u_i^l[t])=0$.

Similarly, we can write:
\begin{appendixeq}
\begin{aligned}
\mathbb{E}(u[t+1]^2)& =\mathbb{E}_k\left(\int_{-\infty}^{\alpha v_{th}^l-k}(z+k)^2p(z)\mathrm{d}z+\int_{\alpha v_{th}^l-k}^{\infty}(z+k-v_{th}^l)^2p(z)\mathrm{d}z\right) \\
& =\mathbb{E}_k\left(\int_{-\infty}^{\alpha v_{th}^l}z^2p(z-k)\mathrm{d}z+\int_{\alpha v_{th}^l}^{\infty}(z-v_{th}^l)^2p(z-k)\mathrm{d}z\right) \\
& =\mathbb{E}_k\left(\int_{-\infty}^{\infty}z^2p(z-k)\mathrm{d}z-v_{th}^l\int_{\alpha v_{th}^l}^{\infty}(2z-v_{th}^l)p(z-k)\mathrm{d}z\right)
\end{aligned}
\end{appendixeq}
Taking the derivative of the above equation with respect to $\alpha$ and exchanging the order of differentiation and integration (i.e., $\mathbb{E}(\cdot)$), we obtain:
\begin{appendixeq}
\begin{aligned}
\frac{\partial\mathbb{E}(u[t+1]^2)}{\partial\alpha}& =-\mathbb{E}_k\left(\frac{\partial}{\partial\alpha}v_{th}^l\int_{\alpha v_{th}^l}^{\infty}(2z-v_{th}^l)p(z-k)\mathrm{d}z\right) \\
& = \mathbb{E}_k\left(\frac{\partial}{\partial\alpha}v_{th}^l\int^{\alpha v_{th}^l}_{\infty}(2z-v_{th}^l)p(z-k)\mathrm{d}z\right) \\
& =(2\alpha-1)(v_{th}^l)^3\mathbb{E}_k\left(p(\alpha v_{th}^l-k)\right)
\end{aligned}
\end{appendixeq}
Note that $\mathbb{E}_k\left(p(\alpha v_{th}^l-k)\right)>0$. The derivative is negative when $\alpha<\frac{1}{2}$ and positive when $\alpha>\frac{1}{2}$. Therefore, when $\alpha=\frac{1}{2}$, $\mathbb{E}(u_i^l[t]^2)$ is minimized for all $t\in\{1,2,\ldots,T\}$.
\end{proof}

\section{Forward method of the TMN neuron}
\label{apsec: tmn forward}

\begin{algorithm}[h]
\caption{Forward method of the TMN neuron}
\label{alg: tmn forward}
\begin{algorithmic}
\State{\bfseries Input:} input $X$ of shape [B, C, H, W], current timestep $t$, pre-charge length $P$, reset signal $R$, full threshold $V$ 
\State{\bfseries Output:} output spike train $S$ of shape [B, C, H, W]
\If{$R==1$}
    \State Membrane potential $U\gets\mathtt{zeros\_like}\left(X\right)$
    \State Predictive threshold $\tilde V\gets\frac{1}{2}\cdot2^P V$
\EndIf
\If{$t\le P$}
    \State $U\gets 2\times U + X$\hspace{15pt}\texttt{/*P-step pre-charge*/}
\Else
    \State $U\gets 2\times U + X$
    \State $S\gets(U\ge \tilde{V})\mathtt{.float}()-(U\le -\tilde{V})\mathtt{.float}()$\hspace{15pt}\texttt{/*fire ternary spikes*/}
    \State $U\gets U-2^PVS[i]$\hspace{15pt}\texttt{/*soft reset*/}
\EndIf
\State $S\gets VS$
\end{algorithmic}
\end{algorithm}

\section{ANN-SNN conversion}
\label{apsec: conversion details}

The proposed method eliminates the need for directly applying weights to input pixel values, as weighting is integrated during neural computation. Therefore, we adopt the widely used direct coding for input static images \cite{Rueckauer2017rate, Li2021cal, Hao2023cos, Hu2023fastsnn}: the analog input activations of the first hidden layer are interpreted as constant currents, with spiking outputs starting from this layer.

We replace the ReLU activation function with the TMN neuron to encode the hidden layer activations. Note that since TMN neurons can encode negative activations, additional logic is required to zero out sequences encoding negative values.This logic simplifies to detecting sequences where the first spike is positive, which can be easily implemented.

To determine the full threshold $v_{th}^l$ for each layer, we use the strategy proposed by Rueckauer et al. \cite{Rueckauer2017rate}: after observing ANN activations over a portion of the training set, we calculate the 99.99th percentile $p^{l}$ of the activation distribution, and then set $v_{th}^{l}$ to $p^{l}$\footnote{More precisely, $v_{th}^l=p^l\cdot \frac{T}{\sum_{t=1}^T2^{T-t}}$. Since scaling the full threshold of each layer by the same value has no practical effect, we directly set $v_{th}^l$ to $p^l$ for simplicity.}. This approach improves the network's robustness to outlier activations. The pseudocode
for the conversion process is provided in Algorithm~\ref{alg: conversion}.

\begin{algorithm}[h]
\caption{Algorithm for ANN-SNN conversion under CSS coding.}
\label{alg: conversion}
\begin{algorithmic}
\State{\bfseries Input:} ANN model $f_A(\hat{W},\hat{b})$, number of timesteps $T$, number of batches $B$, layer count $L$.
\State{\bfseries Output:} SNN models $f_S(W,b)$
\State \texttt{/*determine $v_th^l*/$}
\For{$l=0$ {\bfseries to} $L-1$}
\State $\bar p^l\gets0$
\For{$n=0$ {\bfseries to} $B-1$}
\State $p^l\gets99.99$-th percentile of $a^l$ distribution
\State \texttt{/*take average*/}
\State $\bar p^l\gets\bar p^l+\nicefrac{p^l}{B}$
\EndFor
\State $\theta^l\gets\bar p^l$
\EndFor
\For{$l=0$ {\bfseries to} $L-1$}
\State \texttt{/*copy weight and bias*/}
\State $W^l\gets\hat W^l$
\State $b^l\gets\hat b^l$
\State Replace ReLU activation with TMN.
\EndFor
\end{algorithmic}
\end{algorithm}

\end{document}